\newtheorem{theorem}{Theorem}
\newcommand{\zo}{{\rm zo}}
\newcommand{\ab}{{\rm ab}}
\newcommand{\sq}{{\rm sq}}
\newcommand{\tot}{{\rm tot}}
\newcommand{\para}{{\rm par}}
\newcommand{\quo}{{\rm quo}}
\newcommand{\rem}{{\rm rem}}
\newcommand{\thr}{{\rm thr}}
\DeclareMathOperator{\argmax}{{arg\,max}}
\DeclareMathOperator{\argmin}{{arg\,min}}
\newcommand{\bb}{{\bm{b}}}
\newcommand{\bc}{{\bm{c}}}
\newcommand{\bt}{{\bm{t}}}
\newcommand{\bx}{{\bm{x}}}
\newcommand{\bX}{{\bm{X}}}
\newcommand{\calY}{{\mathcal{Y}}}
\DeclareMathOperator{\bbE}{{\mathbb{E}}}
\DeclareMathOperator{\bbI}{{\mathbbm{1}}}
\newcommand{\bbR}{{\mathbb{R}}}
\newcommand{\bbN}{{\mathbb{N}}}
\newcolumntype{C}[1]{>{\hfil}m{#1}<{\hfil}}
\newcommand{\scs}{\scriptsize}
\title{Parallel Algorithm for Optimal Threshold Labeling of Ordinal Regression Methods}
\author{Ryoya Yamasaki
\And
Toshiyuki Tanaka}
\begin{document}
\maketitle
\begin{abstract}
Ordinal regression (OR) is classification of ordinal data 
in which the underlying categorical target variable has 
a natural ordinal relation for the underlying explanatory variable.
For $K$-class OR tasks, threshold methods learn 
a one-dimensional transformation (1DT) of the explanatory variable 
so that 1DT values for observations of the explanatory variable preserve the order of 
label values $1,\ldots,K$ for corresponding observations of the target variable well,
and then assign a label prediction to the learned 1DT through threshold labeling,
namely, according to the rank of an interval to which the 1DT belongs 
among intervals on the real line separated by $(K-1)$ threshold parameters.
In this study, we propose a parallelizable algorithm 
to find the optimal threshold labeling, 
which was developed in previous research,
and derive sufficient conditions for that algorithm to 
successfully output the optimal threshold labeling.
In a numerical experiment we performed, the computation time 
taken for the whole learning process of a threshold method 
with the optimal threshold labeling could be 
reduced to approximately 60\,\% 
by using the proposed algorithm with parallel processing compared 
to using an existing algorithm based on dynamic programming.
\end{abstract}
\section{Introduction}
\label{sec:Introduction}
\emph{Ordinal regression} (\emph{OR}, or called ordinal classification) is classification of 
\emph{ordinal data} in which the underlying target variable is categorical and considered to 
be equipped with a \emph{natural ordinal relation} for the underlying explanatory variable.
Various applications, including
age estimation \citep{niu2016ordinal, cao2020rank, yamasaki2022optimal}, 
information retrieval \citep{liu2009learning},
movie rating \citep{yu2006collaborative}, and
questionnaire survey \citep{burkner2019ordinal},
leverage OR techniques.

\emph{Threshold methods} (or called threshold models)
have been actively studied in machine learning research
\citep{shashua2003ranking, lin2006large, chu2007support, lin2012reduction, 
li2007ordinal, pedregosa2017consistency, yamasaki2022optimal},
and they are popularly used for OR tasks as 
a simple way to capture the ordinal relation of ordinal data.
Those methods learn a \emph{one-dimensional transformation} (\emph{1DT}) of 
the observation of the explanatory variable so that an observation 
with a larger class label tends to have a larger 1DT value;
they predict a label value through \emph{threshold labeling},
that is, according to the rank of an interval 
to which the learned 1DT belongs among intervals on 
the real line separated by $(K-1)$ \emph{threshold parameters}
for a $K$-class OR task.

\citet{yamasaki2022optimal} proposed to use 
a threshold parameter vector that minimizes the empirical task risk,
and showed experimentally that the corresponding threshold labeling function 
(\emph{optimal threshold labeling}) could lead to better classification performance 
than other labeling functions applied in previous OR methods 
\citep{mccullagh1980regression, shashua2002taxonomy, chu2005new, 
lin2006large, lin2012reduction, pedregosa2017consistency, cao2020rank}.
The previous study \citep{yamasaki2022optimal} employs 
a \emph{dynamic-programming-based (DP) algorithm} \citep{lin2006large} 
for optimization of the threshold parameter vector, but that algorithm does not suit 
acceleration based on parallel processing and requires long computation time 
when the training data size and number of classes are large.
In order to mitigate this trouble, in this study, we propose 
another parallelizable algorithm, which we call 
\emph{independent optimization} (\emph{IO}) \emph{algorithm},
to find the optimal threshold labeling, 
and derive sufficient conditions for that algorithm 
to successfully output the optimal threshold labeling.
Moreover, we demonstrate, through a numerical experiment, 
that the paralleled IO algorithm can reduce the computation time.

The rest of this paper is organized as follows.
Section~\ref{sec:Preparation} introduces formulations of the
OR task, threshold method, and optimal threshold labeling,
in preparation for discussing calculation algorithms 
for the optimal threshold labeling.
Section~\ref{sec:DP} provides a review of an
existing calculation algorithm, the DP algorithm.
Section~\ref{sec:IO} presents the main results of this paper,
IO algorithm and its parallelization procedure and theoretical properties.
Section~\ref{sec:Experiment} gives an experimental 
comparison of the computational efficiency of the DP, 
non-parallelized IO, and parallelized IO algorithms.
Section~\ref{sec:Conclusion} concludes this paper.
Additionally, in Appendix~\ref{sec:FP}, we mention 
another parallelization procedure of the IO algorithm, 
which is more complex than that described in the main text.

\section{Preparation}
\label{sec:Preparation}
\subsection{Formulation of Ordinal Regression Task}
Suppose that we are interested in behaviors of the categorical target variable 
$Y\in[K]:=\{1,\ldots,K\}$ given a value of the explanatory variable $\bX\in\bbR^d$,
and that we have ordinal data $(\bx_1,y_1),\ldots,(\bx_n,y_n)\in\bbR^d\times[K]$
that are considered to be 
independent and identically distributed observations of the pair $(\bX,Y)$
and have a natural ordinal relation like examples described 
in Section~\ref{sec:Introduction}, with $K, d, n\in\bbN$ such that $K\ge3$.%
\footnote{%
In this paper, we do not discuss what the natural ordinal relation is. 
Refer to, for example, \citet{da2008unimodal, yamasaki2022unimodal} for this topic.}
An OR task is a classification task of the ordinal data.
In this paper, we focus on a popular formulation of the OR task,
searching for a classifier $f:\bbR^d\to[K]$ that is good 
in minimization of the \emph{task risk} $\bbE[\ell(f(\bX),Y)]$ for 
a user-specified \emph{task loss function} $\ell:[K]^2\to[0,+\infty)$, 
where the expectation $\bbE[\cdot]$ is taken for $(\bX,Y)$.
Popular task loss functions include not only 
the zero-one task loss $\ell_\zo(k,l):=\bbI(k\neq l)$ 
(for minimization of misclassification rate), but also V-shaped losses 
(for cost-sensitive classification tasks) reflecting the user's preference 
of smaller prediction errors over larger ones such as 
the absolute task loss $\ell_\ab(k,l):=|k-l|$,
and squared task loss $\ell_\sq(k,l):=(k-l)^2$, 
where $\bbI(c)$ is 1 if the condition $c$ is true and 0 otherwise.

In a formulation of the OR task, one may adopt 
other criteria that cannot be decomposed into 
a sum of losses for each data point: 
for example, quadratic weighted kappa 
\citep{cohen1960coefficient,cohen1968weighted}. 
Our discussion in this paper does not cover such criteria.

\subsection{Formulation of Threshold Method}
\label{sec:FTM}
\emph{1DT-based methods} \citep{yamasaki2022optimal} learn 
a 1DT $a:\bbR^d\to\bbR$ of the explanatory variable $\bX$ so that learned
1DT values $\hat{a}(\bx_i)$'s preserve the order of label values $y_i$'s well
\citep{mccullagh1980regression, shashua2002taxonomy, chu2005new, 
lin2006large, lin2012reduction, pedregosa2017consistency, 
cao2020rank, yamasaki2022unimodal, yamasaki2022optimal}.
For example, \emph{ordinal logistic regression (OLR)} 
\citep{mccullagh1980regression} models 
the probability of $Y=y$ conditioned on $\bX=\bx$ by
$P(y,a(\bx),\bb)=\sigma(b_1-a(\bx))$ (for $y=1$),
$\sigma(b_y-a(\bx))-\sigma(b_{y-1}-a(\bx))$ (for $y\in\{2,\ldots,K-1\}$),
$1-\sigma(b_{K-1}-a(\bx))$ (for $y=K$) with a 1DT $a$, 
a bias parameter vector $\bb=(b_k)_{k\in[K-1]}\in\bbR^{K-1}$ 
satisfying $b_1\le\cdots\le b_{K-1}$,
and the sigmoid function $\sigma(u):=1/(1+e^{-u})$,
and it can learn $a$ and $\bb$ via, 
for example, the maximum likelihood method
$(\hat{a},\hat{\bb})\in\argmax_{a,\bb}\prod_{i=1}^nP(y_i,a(\bx_i),\bb)$.
1DT-based methods construct a classifier $f$ as $f=h\circ\hat{a}$ 
with a learned 1DT $\hat{a}$ and a \emph{labeling function} $h:\bbR\to[K]$.
Many of existing 1DT-based methods can be seen 
as adopting a \emph{threshold labeling function},
\begin{equation}
\label{eq:ThrLab}
  h_\thr(u;\bt):= 1+\sum_{k=1}^{K-1}\bbI(u\ge t_k)
\end{equation}
with a threshold parameter vector $\bt=(t_k)_{k\in[K-1]}\in\bbR^{K-1}$;
we call such classification methods threshold methods.

\subsection{Optimal Threshold Labeling}
\label{sec:OPT}
\citet{yamasaki2022optimal} proposed to use a threshold parameter vector 
$\hat{\bt}$ that minimizes the empirical task risk 
for a learned 1DT $\hat{a}$ and specified task loss $\ell$:
\begin{equation}
\label{eq:OPT}
  \hat{\bt}\in\underset{\bt\in\bbR^{K-1}}{\argmin}\,
  \frac{1}{n}\sum_{i=1}^n\ell(h_\thr(\hat{a}(\bx_i);\bt), y_i).
\end{equation}
%
It was also shown experimentally that, for various learning methods of the 1DT, 
the optimal threshold labeling function $h_\thr(u;\hat{\bt})$ could yield smaller 
test task risk than other labeling functions used in previous studies
\citep{mccullagh1980regression, shashua2002taxonomy, chu2005new, 
lin2006large, lin2012reduction, pedregosa2017consistency, cao2020rank},
for example, $h(u)=h_\thr(u;\hat{\bb})$ with a learned bias parameter vector 
$\hat{\bb}$ and likelihood-based labeling function 
$h(u)\in\argmin_{k\in[K]}(\sum_{l=1}^K\ell(k,l)P(l,u,\hat{\bb}))$
for OLR reviewed in Section~\ref{sec:FTM}.

It is important to reduce the computation time required to optimize the threshold parameter vector.
For example, when a user learns the 1DT with an iterative optimization algorithm 
and employs early stopping \citep{prechelt2002early}, the user evaluates the empirical task risk 
using a holdout validation dataset at every epoch.
For this purpose it is necessary to calculate the optimal threshold parameter vector
for a 1DT available at each epoch, and the computation time for this calculation can account for 
a high percentage of that for the whole learning process for a threshold method,
which will be shown by a numerical experiment we took (see Table~\ref{tab:CT}).
Therefore, we study computationally efficient algorithms for 
calculating the optimal threshold parameter vector.

\section{Existing Dynamic-Programming-based (DP) Algorithm}
\label{sec:DP}
It was shown in \citet[Theorem 6]{yamasaki2022optimal} 
that the minimization problem \eqref{eq:OPT} can be solved by 
the DP algorithm (Algorithms~\ref{alg:Preparation} and \ref{alg:DP-algorithm}),
which was developed by \citet{lin2006large}, as
\begin{equation}
\begin{split}
  &\hat{\bt}\in\underset{\bt\in\bbR^{K-1}}{\argmin}\,\frac{1}{n}\sum_{i=1}^n\ell(h_\thr(\hat{a}(\bx_i);\bt), y_i),\\
  &\text{s.t.~}t_1,\ldots,t_{K-1}\in\{c_j\}_{j\in[N+1]}\text{~and~}t_1\le\cdots\le t_{K-1}
\end{split}
\end{equation}
with the \emph{candidate vector} $\bc=(c_j)_{j\in[N+1]}$.
Here, $N$ is the total number of the sorted unique elements 
$\hat{a}_j$, $j=1,\ldots,N$ of $\{\hat{a}(\bx_i)\}_{i\in[n]}$ 
that satisfy $\hat{a}_1<\cdots<\hat{a}_N$ and 
$\hat{a}(\bx_i)\in\{\hat{a}_j\}_{j\in[N]}$ for all $i\in[n]$,
and we define $\bc$ by $c_j:=-\infty$ (for $j=1$), 
$(\hat{a}_{j-1}+\hat{a}_j)/2$ (for $j\in\{2,\ldots,N\}$), $+\infty$ (for $j=N+1$).
Note that there is a degree of freedom in definition of the candidate vector $\bc$,
and that the value of the training task risk does not change 
even if $c_1$, $c_j$ for $j\in\{2,\ldots,N\}$, and $c_{N+1}$ 
are replaced with a smaller value than $\min(\hat{a}_j)_{j\in[N]}$,
a value in $(\hat{a}_{j-1},\hat{a}_j)$, 
and a larger value than $\max(\hat{a}_j)_{j\in[N]}$.

Algorithm~\ref{alg:DP-algorithm} does not suit parallel processing,
and takes the computation time of the order $O(N\cdot K)$.
Note that Algorithm~\ref{alg:Preparation} takes the computation time 
of the order $O(n\log n)$ in average by, for example, quick sort, 
but the actual computation time for Algorithm~\ref{alg:DP-algorithm} 
can be longer than that for Algorithm~\ref{alg:Preparation};
see experimental results in Section~\ref{sec:Experiment}.

{\IncMargin{.5em}\begin{algorithm}[t]
\caption{Preparation for Algorithms~\protect\ref{alg:DP-algorithm}, \protect\ref{alg:IO-algorithm}, and \protect\ref{alg:PO-algorithm}}
\label{alg:Preparation}\SetAlgoNlRelativeSize{0}
\KwIn{Training data $\{(\bx_i,y_i)\}_{i\in[n]}$, learned 1DT $\hat{a}$, and task loss $\ell$.}
\lnl{P1} Calculate $\hat{a}(\bx_i)$, $i=1,\ldots,n$\;
\lnl{P2} Calculate sorted unique elements $\hat{a}_j$, $j=1,\ldots,N$ of $\{\hat{a}(\bx_i)\}_{i\in[n]}$\;
\lnl{P3} Calculate sets $\calY_j=\{y_i\mid \hat{a}(\bx_i)=\hat{a}_j\}_{i\in[n]}$, $j=1,\ldots,N$\;
\tcc{Calculate loss matrix $(M_{j,k})_{j\in[N],k\in[K]}$. Parallelizable with $j\in[N]$ and $k\in[K]$.}
\lnl{P4} \lFor{$j=1,\ldots,N$ and $k=1,\ldots,K$}{$M_{j,k}=\sum_{y_i\in\calY_j}\ell(k,y_i)$}
\lnl{P5} Define the candidate vector $\bc$ as $c_j:=-\infty$ (for $j=1$), 
$(\hat{a}_{j-1}+\hat{a}_j)/2$ (for $j\in\{2,\ldots,N\}$), $+\infty$ (for $j=N+1$)\;
\KwOut{$\hat{a}_j$ and $\calY_j$ ($j=1,\ldots,N$), loss matrix $M$, and candidate vector $\bc$.}
\end{algorithm}\DecMargin{.5em}}
%

{\IncMargin{.5em}\begin{algorithm}[t]
\caption{DP algorithm to calculate the optimal threshold parameter vector\protect\footnotemark}
\label{alg:DP-algorithm}\SetAlgoNlRelativeSize{0}
\KwIn{$\hat{a}_j$ and $\calY_j$ ($j=1,\ldots,N$), loss matrix $M$, and candidate vector $\bc$ prepared by Algorithm~\ref{alg:Preparation}.}
\tcc{Calculate $(L_{j,k})_{j\in[N],k\in[K]}$ sequentially.}
\lnl{DP1} \lFor{$k=1,\ldots,K$}{$L_{1,k}=M_{1,k}$}
\lnl{DP2}\For{$j=2,\ldots,N$}{
  \tcc{An efficient implementation of $L_{j,k}=\min (L_{j-1,l})_{l\in[k]}+M_{j,k}$.}
  \lnl{DP3} $O_j\leftarrow+\infty$\;
  \lnl{DP4} \For{$k=1,\ldots,K$}{
    \lnl{DP5} \lIf{$L_{j-1,k}<O_j$}{$O_j\leftarrow L_{j-1,k}$}
    \lnl{DP6} $L_{j,k}=O_j+M_{j,k}$\;
  }
}
\tcc{Calculate threshold parameters $(\hat{t}_k)_{k\in[K-1]}$ sequentially.}
\lnl{DP7} $I\leftarrow \min(\argmin(L_{N,k})_{k\in[K]})$\;
\lnl{DP8} \lIf{$I\neq K$}{$\hat{t}_k=c_{N+1}$ for $k=I,\ldots,K-1$}
\lnl{DP9} \For{$j=N-1,\ldots,1$}{
  \lnl{DP10} $J\leftarrow \min(\argmin(L_{j,k})_{k\in[I]})$\;
  \lnl{DP11} \lIf{$I\neq J$}{$\hat{t}_k=c_{j+1}$ for $k=J,\ldots,I-1$, and $I\leftarrow J$}
}
\lnl{DP12} \lIf{$I\neq 1$}{$\hat{t}_k=c_1$ for $k=1,\ldots,I-1$}
\KwOut{A threshold parameter vector $(\hat{t}_k)_{k\in[K-1]}$.}
\end{algorithm}\DecMargin{.5em}}
\footnotetext{\label{fn:DP}%
Optimality is guaranteed regardless of 
which element of $\argmin(L_{N,k})_{k\in[K]}$ is set to $I$ in Line~\ref{DP7} and
which element of $\argmin(L_{j,k})_{k\in[I]}$ is set to $J$ in Line~\ref{DP10}.
This is also true for Line~\ref{IO4} of Algorithm~\ref{alg:IO-algorithm} 
and Line~\ref{PO10} of Algorithm~\ref{alg:PO-algorithm} 
described in Appendix~\ref{sec:FP}.}

\section{Proposed Independent Optimization (IO) Algorithm}
\label{sec:IO}
We here propose to learn the threshold parameters $t_k$, $k\in[K-1]$ independently by 
Algorithm~\ref{alg:IO-algorithm} (IO algorithm), 
which is parallelizable with $k\in[K-1]$.
This algorithm is developed according to the conditional relation
\begin{equation}
\label{eq:IOTEQ}
  \frac{1}{n}\sum_{i=1}^n \ell(h_\thr(\hat{a}(\bx_i); \bt), y_i)\\
   =\sum_{k=1}^{K-1}R_k(t_k)
  -\underbrace{\sum_{k=2}^{K-1}R_k(+\infty)}_{\text{$\bt$-independent}}
  \text{ if } t_1\le\cdots\le t_{K-1}
\end{equation}
with the functions $R_k$, $k\in[K-1]$ defined by
\begin{equation}
  R_k(t)
  :=\frac{1}{n}\sum_{i=1}^n \ell(h_\thr(\hat{a}(\bx_i); (\cdots,-\infty, \underbrace{t}_{\text{$k$-th}}, +\infty,\cdots)), y_i),
\end{equation}
which is the empirical task risk when it labels $\hat{a}(\bx_i)<t$ as $k$ and $\hat{a}(\bx_i)\ge t$ as $(k+1)$.
Figure~\ref{fig:IO} schematizes Equation~\eqref{eq:IOTEQ}:
the summation of the red parts implies the left-hand side term of \eqref{eq:IOTEQ}, 
and the summation of the blue parts implies the latter term of the right-hand side of \eqref{eq:IOTEQ}.
This relation implies the equivalence between minimizing the empirical 
task risk $\frac{1}{n}\sum_{i=1}^n \ell(h_\thr(\hat{a}(\bx_i); \bt), y_i)$
and minimizing $R_k(t_k)$, $k=1,\ldots,K-1$ independently for each $k$
when the threshold parameters satisfy the order condition $t_1\le\cdots\le t_{K-1}$.
Therefore, Algorithm~\ref{alg:IO-algorithm} solves 
the latter subproblems associated with each threshold parameter independently.

{\IncMargin{.5em}\begin{algorithm}[t]
\caption{IO algorithm to calculate the optimal threshold parameter vector}
\label{alg:IO-algorithm}\SetAlgoNlRelativeSize{0}
\KwIn{$\hat{a}_j$ and $\calY_j$ ($j=1,\ldots,N$), loss matrix $M$, and candidate vector $\bc$ prepared by Algorithm~\ref{alg:Preparation}.}
\tcc{Parallelizable with $k\in[K-1]$.}
\lnl{IO1} \For{$k=1,\ldots,K-1$}{
  \tcc{Calculate $(R_{j,k})_{j\in[N+1]}$. Further parallelizable (see Appendix~\ref{sec:FP}).}
  \lnl{IO2} $R_{1,k}=0$\;
  \lnl{IO3} \lFor{$j=1,\ldots,N$}{$R_{j+1,k}=R_{j,k}+M_{j,k}-M_{j,k+1}$}
  \tcc{Calculate threshold parameter $\hat{t}_k$.}
  \lnl{IO4} $\hat{t}_k=c_{j_k}$ with $j_k=\min(\argmin(R_{j,k})_{j\in[N+1]})$\;
}
\KwOut{A threshold parameter vector $(\hat{t}_k)_{k\in[K-1]}$.}
\end{algorithm}\DecMargin{.5em}}
%

\begin{figure}[t]
\centering%
\includegraphics[width=12cm, bb=0 0 454 165]{./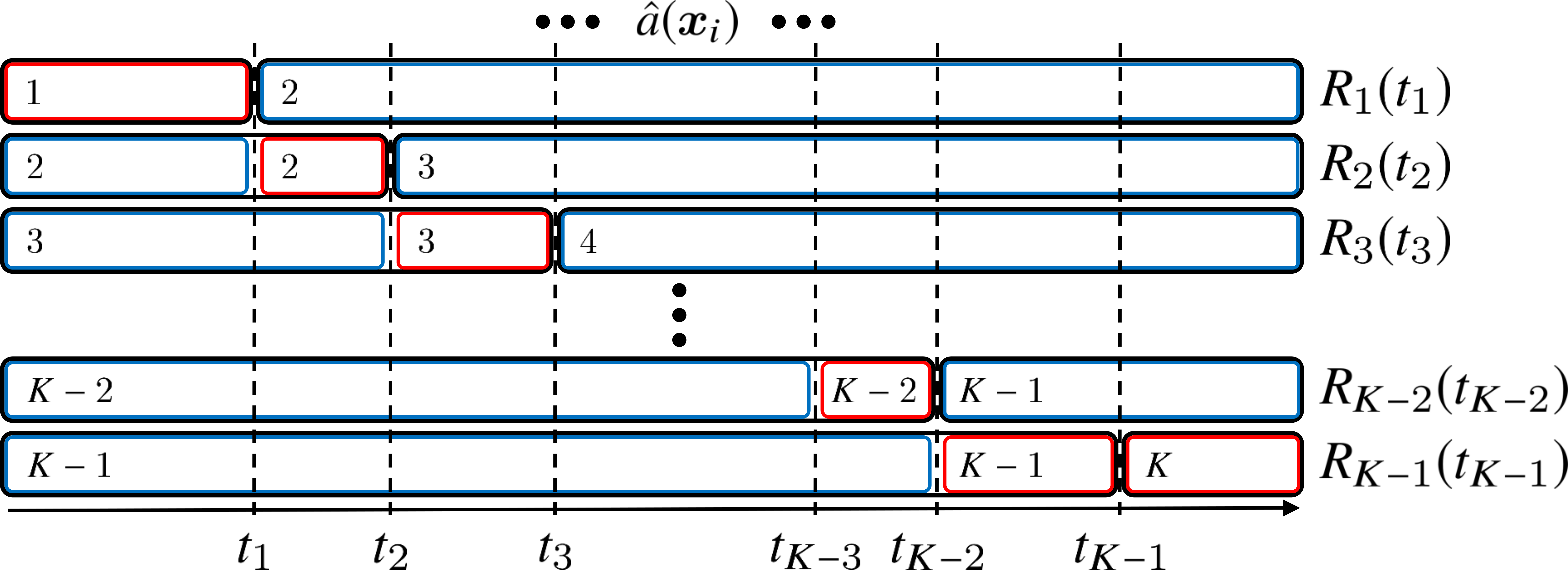}
\caption{%
This figure assists in understanding Equation~\eqref{eq:IOTEQ}.
In calculation of $R_k(t_k)$,
the learned 1DT value $\hat{a}(\bx_i)$ located in the range marked with $l$ is labeled as $l$, 
and the corresponding task loss is $\ell(l,y_i)$.}
\label{fig:IO}
\end{figure}

As the condition of Equation~\eqref{eq:IOTEQ} also suggests,
the threshold parameter vector $\hat{\bt}$ obtained by 
Algorithms~\ref{alg:Preparation} and \ref{alg:IO-algorithm} 
becomes optimal in minimization of the empirical task risk 
as long as the learned vector $\hat{\bt}$ eventually follows 
the appropriate order $\hat{t}_1\le\cdots\le\hat{t}_{K-1}$:
\begin{theorem}
\label{thm:Optimal}
For any data $\{(\bx_i,y_i)\}_{i\in[n]}$, learned 1DT $\hat{a}$, and task loss $\ell$, 
the threshold parameter vector $\bt=\hat{\bt}$ obtained by 
Algorithms~\ref{alg:Preparation} and \ref{alg:IO-algorithm} 
minimizes the empirical task risk $\frac{1}{n}\sum_{i=1}^n \ell(h_\thr(\hat{a}(\bx_i); \bt), y_i)$,
if it satisfies the order condition $\hat{t}_1\le\cdots\le \hat{t}_{K-1}$.
\end{theorem}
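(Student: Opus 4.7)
The plan is to combine the decomposition identity in Equation~\eqref{eq:IOTEQ} with a permutation-invariance property of $h_\thr$, thereby reducing minimization of the empirical task risk over all $\bt\in\bbR^{K-1}$ to the $K-1$ one-dimensional minimizations that Algorithm~\ref{alg:IO-algorithm} carries out independently.

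The first step is to show that Line~\ref{IO4} returns, for each $k$, a minimizer of $R_k$ over $\bbR$. I would unfold definitions to get
\begin{equation*}
  nR_k(c_j) = \sum_{j'<j}M_{j',k} + \sum_{j'\ge j}M_{j',k+1} \qquad (j\in[N+1]),
\end{equation*}
which satisfies the telescoping recursion $nR_k(c_{j+1})-nR_k(c_j)=M_{j,k}-M_{j,k+1}$, identical to the one driving $R_{j,k}$ in Lines~\ref{IO2}--\ref{IO3}. Hence $R_{j,k}$ and $nR_k(c_j)$ differ by a $j$-independent additive constant, share the same argmin over $j\in[N+1]$, and so $\hat{t}_k=c_{j_k}$ minimizes $R_k$ among the candidates. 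Because $R_k$ is piecewise constant on $\bbR$ with breakpoints only at the $\hat{a}_j$'s, and $\bc$ picks one representative from each constancy interval (with $c_1=-\infty$ and $c_{N+1}=+\infty$ covering the two unbounded ones), this candidate minimum is the global minimum: $R_k(\hat{t}_k)=\min_{t\in\bbR}R_k(t)$ for every $k\in[K-1]$.

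Combining the hypothesis $\hat{t}_1\le\cdots\le\hat{t}_{K-1}$ with Equation~\eqref{eq:IOTEQ}, the empirical task risk at $\hat{\bt}$ evaluates to $\sum_{k=1}^{K-1}\min_t R_k(t)-\sum_{k=2}^{K-1}R_k(+\infty)$. To compare against an arbitrary $\bt\in\bbR^{K-1}$, I would invoke permutation invariance: since $h_\thr(u;\bt)=1+\sum_k\bbI(u\ge t_k)$ depends on $\bt$ only through the multiset of its entries, the risk at $\bt$ equals the risk at its non-decreasing rearrangement $\bt^{(s)}$. Applying \eqref{eq:IOTEQ} to $\bt^{(s)}$ gives $\sum_kR_k(t^{(s)}_k)-\sum_{k=2}^{K-1}R_k(+\infty)\ge\sum_k\min_tR_k(t)-\sum_{k=2}^{K-1}R_k(+\infty)$, closing the loop and showing that $\hat{\bt}$ attains the global minimum.

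The main obstacle is the bookkeeping in the first step: aligning indices so that the algorithm's recursion telescopes correctly to $nR_k(c_j)$, and carefully handling the endpoints $c_1=-\infty$ and $c_{N+1}=+\infty$ — both because $R_k(\pm\infty)$ must be read as limits along the two unbounded constancy intervals, and because strict optimality over $\bbR^{K-1}$ (rather than $(\bbR\cup\{\pm\infty\})^{K-1}$) requires tacitly replacing any infinite entries of $\hat{\bt}$ by sufficiently large/small finite values, as permitted by the paper's remark on the freedom in defining $\bc$.
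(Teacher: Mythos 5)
Your proof is correct and follows essentially the same route as the paper's: both rest on the decomposition \eqref{eq:IOTEQ} together with the identity $R_{j,k}=n\{R_k(c_j)-R_k(c_1)\}$, which reduces the problem to the $K-1$ independent one-dimensional minimizations performed in Line~\ref{IO4}. The only substantive difference is that you explicitly justify the reduction from all of $\bbR^{K-1}$ to sorted candidate vectors (via permutation invariance of $h_\thr$ and piecewise constancy of each $R_k$), whereas the paper compares $\hat{\bt}$ only against ordered candidate vectors $(c_{i_k})_{k\in[K-1]}$ and leaves that reduction implicit, relying on the restriction already stated in Section~\ref{sec:DP} via Theorem~6 of \citet{yamasaki2022optimal}.
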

\begin{proof}[Proof of Theorem~\ref{thm:Optimal}]
In this proof, we use the notations $\hat{a}_j$, $R_{j,k}$, and $j_k$, defined 
in Algorithms~\ref{alg:Preparation} and \ref{alg:IO-algorithm} as well.
Because of the definition, it holds that
\begin{equation}
\label{eq:IOTEQ2}
  R_{j,k}=n\{R_k(c_j)-R_k(c_1)\}
  \text{~for all~}j\in[N+1],\,k\in[K-1].
\end{equation}
According to Equations \eqref{eq:IOTEQ} and \eqref{eq:IOTEQ2},
one has that
\begin{equation}
\label{eq:sumLikeq}
  \begin{split}
  &\frac{1}{n}\sum_{i=1}^n \ell(h_\thr(\hat{a}(\bx_i); (c_{i_k})_{k\in[K-1]}, y_i)\\
  &=\sum_{k=1}^{K-1}R_k(c_{i_k})
  -\sum_{k=2}^{K-1}R_k(+\infty)
  \quad(\because\eqref{eq:IOTEQ})\\
  &=\sum_{k=1}^{K-1}\biggl\{\frac{1}{n}R_{i_k,k}+R_k(c_1)\biggr\}
  -\sum_{k=2}^{K-1}R_k(+\infty)
  \quad(\because\eqref{eq:IOTEQ2})\\
  &=\frac{1}{n}\sum_{k=1}^{K-1}R_{i_k,k}
  +\text{($(i_k)_{k\in[K-1]}$-independent term)}
  \end{split}
\end{equation}
with indices $i_1,\ldots,i_{K-1}\in[N+1]$ satisfying $i_1\le \cdots\le i_{K-1}$.
Minimization of $\sum_{k=1}^{K-1}R_{i_k,k}$ regarding $i_1,\ldots,i_{K-1}$ amounts to 
minimization of the empirical task risk with the threshold parameters $(c_{i_k})_{k\in[K-1]}$ regarding $i_1,\ldots,i_{K-1}$
as far as the solution of the former problem keeps the ascending order.
Algorithm~\ref{alg:IO-algorithm} finds the index $j_k$ that minimizes $(R_{j,k})_{j\in[N+1]}$ for each $k\in[K-1]$.
Namely, $(j_k)_{k\in[K-1]}$ is a minimizer of $\sum_{k=1}^{K-1}R_{i_k,k}$ regarding $i_1,\ldots,i_{K-1}$.
Therefore, under the assumption of this theorem 
($\hat{t}_1\le\cdots\le\hat{t}_{K-1}$ or equivalently $j_1\le \cdots\le j_{K-1}$),
the threshold parameter vector $\hat{\bt}=(c_{j_k})_{k\in[K-1]}$ minimizes the empirical task risk. 
\end{proof}

Furthermore, we found that the assumption of Theorem~\ref{thm:Optimal} 
holds for the task with a convex task loss, 
which is defined by being convex regarding the first augment,
for example, $\ell=\ell_\ab, \ell_\sq$:
\begin{theorem}
\label{thm:Convex}
For any data $\{(\bx_i,y_i)\}_{i\in[n]}$ and learned 1DT $\hat{a}$, 
the threshold parameter vector $\bt=\hat{\bt}$ obtained by Algorithms~\ref{alg:Preparation} and \ref{alg:IO-algorithm} 
satisfies the order condition $\hat{t}_1\le\cdots\le \hat{t}_{K-1}$ and hence minimizes 
the empirical task risk $\frac{1}{n}\sum_{i=1}^n \ell(h_\thr(\hat{a}(\bx_i); \bt), y_i)$,
if the task loss function $\ell$ satisfies 
\begin{equation}
\label{eq:Convex}
  \ell(k,l)-2\ell(k+1,l)+\ell(k+2,l)\ge0
  \text{~for all~}k\in[K-2],\,l\in[K].
\end{equation}
\end{theorem}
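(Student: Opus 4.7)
The plan is to establish the order condition $\hat{t}_1\le\cdots\le\hat{t}_{K-1}$ directly; once this is shown, Theorem~\ref{thm:Optimal} immediately yields the optimality conclusion. Since the candidate vector $\bc$ is nondecreasing in its index, the order condition on $\hat{\bt}$ is equivalent to the claim $j_1\le\cdots\le j_{K-1}$ for the indices produced in Line~\ref{IO4}. I would attack this by a monotone-comparative-statics argument: the hypothesis \eqref{eq:Convex} is a discrete second-difference inequality that should force the argmins of $R_{\cdot,k}$ to move rightward as $k$ grows.

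First, I would read the forward differences off Line~\ref{IO3}: setting $\Delta_k(j):=R_{j+1,k}-R_{j,k}=M_{j,k}-M_{j,k+1}$ and substituting $M_{j,k}=\sum_{y_i\in\calY_j}\ell(k,y_i)$, the hypothesis \eqref{eq:Convex} gives $\Delta_k(j)-\Delta_{k+1}(j)\ge 0$ for every $j\in[N]$ and $k\in[K-2]$, i.e., the forward difference of $R_{\cdot,k}$ in $j$ is nonincreasing in $k$. Then I would argue by contradiction: supposing $j_{k+1}<j_k$ for some $k$, telescoping $\Delta_k$ and $\Delta_{k+1}$ between $j_{k+1}$ and $j_k$ yields
\[
R_{j_k,k}-R_{j_{k+1},k}\;\ge\;R_{j_k,k+1}-R_{j_{k+1},k+1}.
\]
The smallest-argmin tiebreaker in Line~\ref{IO4} makes the left-hand side \emph{strictly} negative (no index below $j_k$ minimizes $R_{\cdot,k}$), while the right-hand side is nonnegative since $j_{k+1}$ minimizes $R_{\cdot,k+1}$. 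This contradiction forces $j_k\le j_{k+1}$ for every $k$, and hence the desired order condition.

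The main obstacle I foresee is harvesting \emph{strict} negativity from a merely weak convexity inequality. If Line~\ref{IO4} took an arbitrary element of the argmin set, the contradiction above could degenerate into $0\ge 0$ in the presence of ties; the smallest-argmin convention is precisely what rules this out. Noticing this subtlety, and being careful that the argument needs strict inequality on the left and only weak inequality on the right, is the only delicate point. Everything else reduces to the one-line telescoping identity above, followed by an appeal to Theorem~\ref{thm:Optimal}.
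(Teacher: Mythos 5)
Your proof is correct and follows essentially the same route as the paper's: a contradiction argument that telescopes $R_{j_k,k}-R_{j_{k+1},k}$ and $R_{j_k,k+1}-R_{j_{k+1},k+1}$ via Line~\ref{IO3}, applies the second-difference condition \eqref{eq:Convex} over $j\in\{j_{k+1},\ldots,j_k-1\}$, and uses the smallest-argmin tiebreaker of Line~\ref{IO4} to rule out the tie case. The only difference is presentational: you compress the paper's two cases (equality vs.\ strict inequality in the exchange quantity) into a single chain $0>R_{j_k,k}-R_{j_{k+1},k}\ge R_{j_k,k+1}-R_{j_{k+1},k+1}\ge0$, which is a slightly cleaner write-up of the identical idea.
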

\begin{proof}[Proof of Theorem~\ref{thm:Convex}]
In this proof, we use the notations $\hat{a}_j$, $\calY_j$, $M_{j,k}$, $R_{j,k}$, and $j_k$, 
defined in Algorithms~\ref{alg:Preparation} and \ref{alg:IO-algorithm} as well.
We prove this theorem using the proof by contradiction.
Assume $\hat{t}_k>\hat{t}_{k+1}$ (i.e., $j_k>j_{k+1}$) with some $k\in[K-2]$.
One has that
\begin{equation}
\label{eq:Comp}
  \begin{split}
  &\{R_{j_k,k}+R_{j_{k+1},k+1}\}-\{R_{j_{k+1},k}+R_{j_k,k+1}\}\\
  &=\biggl\{\sum_{j<j_k}(M_{j,k}-M_{j,k+1})+\sum_{j<j_{k+1}}(M_{j,k+1}-M_{j,k+2})\biggr\}\\
  &-\biggl\{\sum_{j<j_{k+1}}(M_{j,k}-M_{j,k+1})+\sum_{j<j_k}(M_{j,k+1}-M_{j,k+2})\biggr\}
  \quad(\because\text{Algorithm~\ref{alg:IO-algorithm}, Line~\ref{IO3}})\\
  &=\sum_{j<j_k}(M_{j,k}-2M_{j,k+1}+M_{j,k+2})
  -\sum_{j<j_{k+1}}(M_{j,k}-2M_{j,k+1}+M_{j,k+2})\\
  &=\sum_{j\in\{j_{k+1},\ldots,j_k-1\}}(M_{j,k}-2M_{j,k+1}+M_{j,k+2})\\
  &=\sum_{j\in\{j_{k+1},\ldots,j_k-1\}}\sum_{y_i\in\calY_j}\{\ell(k,y_i)-2\ell(k+1,y_i)+\ell(k+2,y_i)\}
  \quad(\because\text{Algorithm~\ref{alg:Preparation}, Line~\ref{P4}})\\
  &\ge0\quad(\because\eqref{eq:Convex}).
  \end{split}
\end{equation}
When the equality holds in the last inequality of \eqref{eq:Comp},
$R_{j_k,k}+R_{j_{k+1},k+1}=R_{j_{k+1},k}+R_{j_k,k+1}$ also holds.
Furthermore, $R_{j_k,k}\le R_{j_{k+1},k}$ and $R_{j_{k+1},k+1}\le R_{j_k,k+1}$ hold because 
$j_k$ minimizes $(R_{j,k})_{j\in[N+1]}$ and $j_{k+1}$ minimizes $(R_{j,k+1})_{j\in[N+1]}$.
From these results, it necessarily holds that $R_{j_k,k}=R_{j_{k+1},k}$ and $R_{j_{k+1},k+1}=R_{j_k,k+1}$,
which contradicts the definition $j_k=\min(\argmin(R_{j,k})_{j\in[N+1]})=\min(\{j_{k+1}, j_k,\ldots\})$ since $j_k>j_{k+1}$.
When the equality does not hold in the last inequality of \eqref{eq:Comp},
$R_{j_k,k}+R_{j_{k+1},k+1}>R_{j_{k+1},k}+R_{j_k,k+1}$ holds, and hence 
the threshold parameter vector $(\check{t}_k)_{k\in[K-1]}$ consisting of 
$\check{t}_k=\hat{t}_{k+1}$, $\check{t}_{k+1}=\hat{t}_k$,
and $\check{t}_j=\hat{t}_j$ for $j\neq k, k+1$ is better than $\hat{\bt}$ 
in minimization of $\sum_{k=1}^{K-1}R_{i_k,k}$ regarding $i_1,\ldots,i_{K-1}$.
According to the sub-optimality of $\hat{\bt}$,
it can be seen that Algorithm~\ref{alg:IO-algorithm} does not output $\hat{\bt}$,
which is a contradiction to the setting of this proof.
Therefore, the proof is concluded.
\end{proof}

On the ground of Theorems~\ref{thm:Optimal} and \ref{thm:Convex},
it is guaranteed that one can obtain the optimal threshold parameter vector efficiently 
by the \emph{parallelized IO} (\emph{PIO}) \emph{algorithm} 
(Algorithm~\ref{alg:IO-algorithm}, 
or Algorithm~\ref{alg:PO-algorithm} in Appendix~\ref{sec:FP})
for the task with a convex task loss function.

\section{Experiment}
\label{sec:Experiment}
In order to verify that the proposed PIO algorithm can be faster than the existing DP algorithm, 
we performed a practical OR experiment using real-world datasets to compare 
the computational efficiency of the DP, (non-parallelized) IO, and PIO algorithms.

We addressed an OR task of estimating age from facial image by 
a discrete-value prediction, with reference to experiment settings in 
the previous OR studies \citep{cao2020rank, yamasaki2022optimal}.
The used datasets are MORPH-2 \citep{ricanek2006morph}, 
CACD \citep{chen2014cross}, and AFAD \citep{niu2016ordinal}.
After pre-processing as that in \citet{cao2020rank, yamasaki2022optimal}, 
we used data of the total data size $n_\tot=55013, 159402, 164418$ 
and $K=55, 49, 26$ in these datasets.
For each dataset, we resized all images to $128\times 128$ pixels of 3 RGB channels 
and divided the dataset into a 90\,\% training set and a 10\,\% validation set.
As a learning method of the 1DT, 
we tried OLR that we reviewed in Section~\ref{sec:Preparation}.
We implemented the 1DT with ResNet18 or ResNet34 \citep{he2016deep}, 
and trained the network using the negative-log-likelihood loss function 
and Adam with the mini-batch size 256 for 100 epochs.
For the task with the absolute task loss $\ell=\ell_\ab$, 
we optimized the threshold parameters by the DP, IO, or PIO algorithms 
and calculated the empirical task risk for the validation set, 
validation error (VE), at every epoch.
We used images randomly cropped to $120 \times 120$ pixels of 3 channels 
in `training of 1DT', and images center-cropped to the same size in 
`preparation (Algorithm~\ref{alg:Preparation})', 
`optimization of threshold parameters', and `calculation of VE', as inputs.

We experimented with Python 3.8.13 and PyTorch 1.10.1
under the computational environment with 
4 CPUs Intel Xeon Silver 4210 and a single GPU GeForce RTX A6000.
We performed `training of 1DT', 
`Line~\ref{P1} of preparation', 
and `calculation of VE' using the GPU, 
and `remaining lines of preparation' 
and `optimization of threshold parameters' using the CPUs,
which was the best allocation of the CPUs and GPU among those we tried.
We employed the parallelization with {\tt DataLoader} 
of {\tt num\_workers} 8 for the calculations on the GPU, 
$K$-parallelization for `Line~\ref{P4} of preparation' 
and $(K-1)$-parallelization for `Lines~\ref{IO1}--\ref{IO4} 
of Algorithm~\ref{alg:IO-algorithm}' of the PIO
among the calculations on the CPUs.
See \url{https://github.com/yamasakiryoya/ECOTL} for program codes we used.
%

\begin{table}[t]
\renewcommand{\tabcolsep}{5pt}
\centering%
\caption{Calculation time for 100 epochs (in minutes).}\label{tab:CT}
\scalebox{0.8}{
\begin{tabular}{cc|C{1.6cm}|C{1.6cm}|C{1.6cm}}
\toprule
&&MORPH-2&CACD&AFAD\\
\midrule
\multirow{2}{*}{Training of 1DT}
&{\scs `ResNet18,\,GPU' or}&32.00&86.59&83.46\\
&{\scs `ResNet34,\,GPU'}&45.27&125.27&121.54\\
\midrule
&{\scs `Line\,\ref{P1},\,ResNet18,\,GPU'\,or}&23.22&48.11&58.30\\
Preparation
&{\scs `Line\,\ref{P1},\,ResNet34,\,GPU',}&21.33&50.61&59.46\\
(Algorithm~\ref{alg:Preparation})
&{\scs `Lines\,\ref{P2} and \ref{P3},\,CPUs',\,and}&1.05&3.61&3.63\\
&{\scs `Lines\,\ref{P4} and \ref{P5},\,CPUs'}&9.66&13.60&9.94\\
\midrule
\multirow{2}{*}{Optimization of}
&{\scs `DP,\,CPUs',}&81.04&186.60&130.35\\
\multirow{2}{*}{threshold parameters}
&{\scs `IO,\,CPUs',\,or}&87.71&224.09&123.55\\
&{\scs `PIO,\,CPUs'}&18.45&24.38&15.26\\
\midrule
\multirow{2}{*}{Calculation of VE}
&{\scs `ResNet18,\,GPU'\,or}&5.20&8.40&9.68\\
&{\scs `ResNet34,\,GPU'}&5.07&9.18&9.85\\
\bottomrule
\end{tabular}}
\end{table}

Table~\ref{tab:CT} shows the calculation time taken for each process.
The total calculation time with the DP, IO, and PIO algorithms
($t_{\rm DP}$, $t_{\rm IO}$, and $t_{\rm IO}$) had the following relationship:
the ratios $(t_{\rm IO}/t_{\rm DP}, t_{\rm PIO}/t_{\rm DP})$ are
$(1.04,0.59)$ for MORPH-2, $(1.11,0.53)$ for CACD, and $(0.98,0.61)$ for AFAD for the ResNet18-based 1DT;
$(1.04,0.62)$ for MORPH-2, $(1.10,0.58)$ for CACD, and $(0.98,0.66)$ for AFAD for the ResNet34-based 1DT.
Numerical evaluations related to the computation time may change 
slightly depending on a used computational environment,
but we consider that the following facts will hold universally:
the DP and IO algorithms have comparable computational efficiency,
while the PIO algorithm can be faster than the former two algorithms.

\section{Conclusion}
\label{sec:Conclusion}
In this paper, we proposed the parallelizable IO algorithm (Algorithm~\ref{alg:IO-algorithm})
for acquiring the optimal threshold labeling \citep{yamasaki2022optimal}, 
and derived sufficient conditions (Theorems~\ref{thm:Optimal} and \ref{thm:Convex})
for this algorithm to successfully output the optimal threshold labeling.
Our numerical experiment showed that the computation time 
for the whole learning process of a threshold method with 
the optimal threshold labeling was reduced to approximately 
60\,\% under our computational environment, 
by using the parallelized IO algorithm compared to using the existing DP algorithm.
On the ground of these results, 
we conclude that the proposed algorithm can be useful 
to accelerate the application of threshold methods 
especially for cost-sensitive tasks with a convex task loss, 
which are often tackled in OR research.



\bibliographystyle{abbrvnat}
\bibliography{multitask_learning, ordinal_regression, SVM_SVR, machine_learning, sparse, age_estimation}

\appendix
\section{Further Parallelization of IO Algorithm}
\label{sec:FP}
The IO algorithm (Algorithm~\ref{alg:IO-algorithm}) can be parallelized further.
Algorithm~\ref{alg:PO-algorithm} describes a parallelization procedure 
regarding $k\in[K-1]$ and $l\in[L^\para]$ with 
$L^\para=\lceil (N+1)/L\rceil$ for a user-specified integer $L$.
This algorithm calculates the translated version of $L^\para$ blocks 
splitting $(R_{j,k})_{j\in[N+1]}$ (Lines~\ref{PO2}--\ref{PO5}) and 
then modifies the translation of each block (Lines~\ref{PO6}--\ref{PO9}) 
in parallel, to calculate the whole of $(R_{j,k})_{j\in[N+1]}$,
instead of Lines~\ref{IO2} and \ref{IO3} of Algorithm~\ref{alg:IO-algorithm},
for each $k\in[K-1]$.
Algorithms~\ref{alg:IO-algorithm} and \ref{alg:PO-algorithm} yield the same outputs.
If one has sufficient computing resources, 
it would be better to set $L$ to an integer of the order $O(\sqrt{N})$
theoretically since this parallelized IO algorithm takes 
computation time of the order $O(L+L^\para)$,
and this implementation improves the order of computation time from 
$O(N\cdot K)$ of the DP algorithm (Algorithms~\ref{alg:DP-algorithm}) to $O(\sqrt{N})$.
However, we note that it is not realistic to expect as dramatic an improvement in actual computation time as the order evaluation suggests, 
owing to the limitation of computing resources:
under the setting of the experiment in Section~\ref{sec:Experiment},
Algorithm~\ref{alg:PO-algorithm} with 
$L=\lceil (N+1)/2\rceil, \lceil (N+1)/3\rceil, \lceil (N+1)/4\rceil$ 
(such that $L^\para=2,3,4$) and CPUs takes 
$37.27,52.07,67.08$ for MORPH-2, $40.38,51.98,64.95$ for CACD, and 
$23.33,28.92,35.43$ for AFAD (in minutes) for `optimization of threshold parameters' for 100 epochs.
`PIO, CPUs' following Algorithm~\ref{alg:IO-algorithm} was better
than Algorithm~\ref{alg:PO-algorithm} under our computational environment.

{\IncMargin{.5em}\begin{algorithm}[t]
\caption{Further parallelized IO algorithm to calculate the optimal threshold parameter vector{\protect\footnotemark}}
\label{alg:PO-algorithm}\SetAlgoNlRelativeSize{0}
\KwIn{$\hat{a}_j$ and $\calY_j$ ($j=1,\ldots,N$), loss matrix $M$, candidate vector $\bc$ prepared by Algorithm~\ref{alg:Preparation},
$L\in\bbN$, $L^\para:=\lceil (N+1)/L\rceil$, $L^\quo:=\lfloor (N+1)/L\rfloor$, $L^\rem:=(N+1)-L\cdot L^\quo$, and $L_l:= (l-1)L$.}
\tcc{Parallelizable with $k\in[K-1]$.}
\lnl{PO1} \For{$k=1,\ldots,K-1$}{
  \tcc{Calculate $(Q_{j,k})_{j\in[N+1]}$. Parallelizable with $l\in[L^\para]$.}
  \lnl{PO2} \For{$l=1,\ldots,L^\para$}{
    \lnl{PO3} $Q_{L_l+1,k}=0$\;
    \lnl{PO4} \lIf*{$l\le L^\quo$}{\lFor{$j=1,\ldots,L-1$}{$Q_{L_l+j+1,k}=Q_{L_l+j,k}+M_{L_l+j,k}-M_{L_l+j,k+1}$}}
    \lnl{PO5} \lElse*{\lFor{$j=1,\ldots,L^\rem-1$}{$Q_{L_l+j+1,k}=Q_{L_l+j,k}+M_{L_l+j,k}-M_{L_l+j,k+1}$}}
  }
  \tcc{Calculate $(R_{j,k})_{j\in[N+1]}$. Parallelizable with $l\in[L^\para]$.}
  \lnl{PO6} \For{$l=1,\ldots,L^\para$}{
    \lnl{PO7} $S_{k,l}=\bbI(l\neq1)\sum_{m\in[l-1]}(Q_{L_{m+1},k}-Q_{L_m+1,k}+M_{L_{m+1},k}-M_{L_{m+1},k+1})$\;
    \lnl{PO8} \lIf*{$l\le L^\quo$}{\lFor{$j=1,\ldots,L$}{$R_{L_l+j,k}=Q_{L_l+j,k}+S_{k,l}$}}
    \lnl{PO9} \lElse*{\lFor{$j=1,\ldots,L^\rem$}{$R_{L_l+j,k}=Q_{L_l+j,k}+S_{k,l}$}}
  }
  \tcc{Calculate threshold parameter $\hat{t}_k$.}
  \lnl{PO10} $\hat{t}_k=c_{j_k}$ with $j_k=\min(\argmin(R_{j,k})_{j\in[N+1]})$\;
}
\KwOut{A threshold parameter vector $(\hat{t}_k)_{k\in[K-1]}$.}
\end{algorithm}\DecMargin{.5em}}
\footnotetext{
$\lceil\cdot\rceil$ is the ceiling function, and $\lfloor\cdot\rfloor$ is the floor function.}


\end{document}